\newtheorem{theorem}{Theorem}
\newtheorem{definition}{Definition}
\newcommand{\function}{f}
\newcommand{\functionvar}{x}
\newcommand{\optimalpoint}{\functionvar^*}
\newcommand{\stepsize}{\mu}
\newcommand{\argmin}{\arg\min}
\newcommand{\timeindex}{k}
\newcommand{\query}{q}
\newcommand{\gradient}{\nabla}
\newcommand{\noise}{\eta}
\newcommand{\response}{r}
\newcommand{\numsuccessfulupdates}{M}
\newcommand{\numqueries}{N}
\newcommand{\learnerstate}{b}
\newcommand{\actionspace}{\mathcal{U}}
\newcommand{\action}{u}
\newcommand{\learnaction}{a}
\newcommand{\policy}{\pi}
\newcommand{\numinteractions}{T}
\newcommand{\statespace}{\mathcal{Y}}
\newcommand{\statevar}{y}
\newcommand{\learnersymbol}{B}
\newcommand{\oraclesymbol}{O}
\newcommand{\oraclelevels}{R}
\newcommand{\oraclestate}{o}
\newcommand{\R}{\mathbb{R}}
\newcommand{\learnerestimate}{\hat{\functionvar}}
\newcommand{\probability}{\mathds{P}}
\newcommand{\transitionqueue}[1]{\mathcal{P}^{\oraclestate,\oraclestate^{\prime}}_{\learnerstate^{#1}}}
\newcommand{\oracleevolutionprob}[2]{\Delta({#2}|{#1})}
\newcommand{\valuefn}{V}
\newcommand{\qfunction}{Q}
\newcommand{\dpindex}{n}
\newcommand{\cost}{c}
\newcommand{\queuecost}{d}
\newcommand{\optpolicy}{\policy^*}
\newcommand{\indicator}{\mathds{1}}
\newcommand{\thresholdlower}{\Bar{\learnerstate}}
\newcommand{\expectation}{\mathds{E}}
\newcommand{\policyspace}{\Pi}
\newcommand{\optaction}{\action^{*}}
\newcommand{\sigmoidalparameter}{\tau}
\newcommand{\approxpolicy}{\hat{\policy}}
\newcommand{\perturbation}{\delta}
\newcommand{\lipschitz}{\gamma}
\newcommand{\functionvaralt}{z}
\newcommand{\dimfunc}{d}
\newcommand{\noisevariance}{\sigma^2}
\newcommand{\succfunction}{\Gamma}
\newcommand{\price}{i}
\newcommand{\numprices}{{n_{\price}}}
\newcommand{\previousincentivesum}{I_\dpindex}
\newcommand{\succreply}{s}
\newcommand{\truetrajectory}{\mathcal{J}_1}
\newcommand{\falsetrajectory}{\mathcal{J}_2}
\newcommand{\trajectoryprobability}{\delta}
\newcommand{\thresholdcount}{F}
\newcommand{\obfuscatingestimate}{\hat{\functionvaralt}}
\newcommand{\buffercost}{\psi_1}
\newcommand{\oraclecost}{\psi_2}
\newcommand{\spsastepsize}{\phi}
\newcommand{\parametersspsa}{\Theta}
\newcommand{\spsatimeindex}{i}
\newcommand{\spsaapproxgradient}{\Tilde{\gradient}C}
\newcommand{\spsatimehorizon}{H}
\newcommand{\increasing}{\uparrow}
\newcommand{\idfactorone}{\alpha}
\newcommand{\idfactortwo}{\beta}
\newcommand{\idfactorthree}{\gamma}
\newcommand{\timeindexalt}{t}
\begin{document}
\title{Structured Reinforcement Learning for Incentivized Stochastic Covert Optimization}

\author{Adit Jain and Vikram Krishnamurthy, \IEEEmembership{Fellow, IEEE} 
\thanks{A. Jain and V. Krishnamurthy are with the School of Electrical and Computer Engineering, 
        Cornell University, Ithaca, NY, 14853, USA
        {email: \tt\small aj457@cornell.edu, vikramk@cornell.edu }}%
}
\maketitle
\begin{abstract}
  This { paper studies how a stochastic gradient algorithm (SG) can be controlled to hide the estimate of the local stationary point from an eavesdropper. Such problems are of significant interest in distributed optimization settings like federated learning and inventory management. A learner queries a stochastic oracle and incentivizes the oracle to obtain noisy gradient measurements and perform SG. The oracle probabilistically returns either a noisy gradient of the function} or a non-informative measurement, depending on the oracle state and incentive. The learner's query and incentive are visible to an eavesdropper who wishes to estimate the stationary point. 
This paper formulates the problem of the learner performing covert optimization by dynamically incentivizing the stochastic oracle and obfuscating the eavesdropper as a finite-horizon Markov decision process (MDP). Using conditions for interval-dominance on the cost and transition probability structure, we show that the optimal policy for the MDP has a monotone threshold structure. We propose searching for the optimal stationary policy with the threshold structure using a stochastic approximation algorithm and a multi-armed bandit approach. The effectiveness of our methods is numerically demonstrated on a covert federated learning hate-speech classification task.
\end{abstract}
\textit{Keywords: Markov decision process, Covert optimization, Structural results, Interval dominance}

\section{Introduction}
{The learner aims to obtain an estimate $\learnerestimate$ for a point $\optimalpoint \in \argmin_{\functionvar\in\R^\dimfunc} \function(\functionvar)$~\footnote{By $\argmin$ or minimizer we mean a local stationary point of $\function \in \mathcal{C}^2$.} by querying a stochastic oracle. At each time $\timeindex = 1, 2, \dots$, the learner sends query $\query_\timeindex \in \R^\dimfunc$ and incentive $\price_\timeindex$ to a stochastic oracle in state $\oraclestate_\timeindex$. The oracle returns a noisy gradient, $\response_\timeindex$ evaluated at $\query_\timeindex$ as follows: }
\begin{align}\label{eq:oraclereply}
    \response_\timeindex = \begin{cases}
            \gradient\function(\query_\timeindex) + \noise_\timeindex & \text{with prob.} \ \succfunction(\oraclestate_\timeindex,\price_\timeindex) \\
            {\textbf{0}}\ \text{(non-informative)} &\text{with prob.}\ 1-\succfunction(\oraclestate_\timeindex,\price_\timeindex) 
        \end{cases}.
\end{align}
{Here $(\noise_\timeindex)$ are independent, zero-mean finite-variance random variables, and $\succfunction$ denotes the probability that the learner gets a noisy informative response from the oracle.}

An eavesdropper observes query $\query_\timeindex$ and incentive $\price_\timeindex$ but not response $\response_\timeindex$. {The eavesdropper aims to estimate  $\learnerestimate$, as an approximation to the minimizer of the function the eavesdropper is interested in optimizing.} { This paper addresses the question: }{\textit{Suppose the learner uses a stochastic gradient (SG) algorithm to obtain an estimate $\learnerestimate$. How can the learner control the SG to hide $\learnerestimate$ from an eavesdropper?}}

{
Our proposed approach is to dynamically switch between two SGs. Let $\learnaction_\timeindex \in \{ 0 = \texttt{Obfuscate SG}, 1 = \texttt{Learn SG} \}$ denote the chosen SG at time $\timeindex$. The first SG  minimizes function $\function$ and updates the learner estimate $\learnerestimate_{\timeindex}$. The second SG is for obfuscating the eavesdropper with estimates $\obfuscatingestimate_\timeindex$. The update of both SGs is given by the equation,
\begin{align}\label{eq:controlledgradientstep}
\begin{bmatrix}
    \learnerestimate_{\timeindex+1}\\
    \obfuscatingestimate_{\timeindex+1}
\end{bmatrix}
     = \begin{bmatrix}
    \learnerestimate_{\timeindex}\\
    \obfuscatingestimate_{\timeindex}
\end{bmatrix} - \stepsize_\timeindex\begin{bmatrix}
\indicator(\learnaction_\timeindex=1) & 0 \\
0 & \indicator(\learnaction_\timeindex=0)
\end{bmatrix} 
\begin{bmatrix}
    \response_\timeindex\\
    \Bar{\response}_\timeindex
\end{bmatrix},
\end{align}
where $\stepsize_\timeindex$ is the step size, $\Bar{\response}_\timeindex$ is a synthetic gradient response discussed later and $\learnaction_\timeindex$ controls the SG to update.

The query $\query_\timeindex$ by the learner to the oracle is given by, 
\begin{align}\label{eq:query}
    \query_\timeindex &= \learnerestimate_\timeindex \indicator(\learnaction_\timeindex=1) + \obfuscatingestimate_\timeindex\indicator(\learnaction_\timeindex=0).
\end{align}
and $\action_\timeindex = (\learnaction_\timeindex,\price_\timeindex)$ is the control learner variable (action). 
The learner needs $\numsuccessfulupdates$ informative updates of~\eqref{eq:controlledgradientstep} to achieve the learning objective in $\numqueries$ queries. We formulate an MDP whose policy $\policy$ controls the switching of SGs and incentivization by the learner, to minimize the expected cost balancing obfuscation and learning. The optimal policy $\optpolicy$ solving the MDP is shown to have a threshold structure (Theorem~\ref{th:structural}) of the form, 
\begin{align*}
    \optpolicy(\learnerstate,\oraclestate,\dpindex) = \begin{cases}
       \learnaction = 0 \ (\texttt{obfuscate}), \ &  \learnerstate \leq \thresholdlower(\oraclestate,\dpindex) \\ 
      \learnaction = 1 \ (\texttt{learn}), \ & \learnerstate > \thresholdlower(\oraclestate,\dpindex)
    \end{cases},
\end{align*}
where $\learnerstate$ is the number of informative learning steps left, $\dpindex$ is the number of queries left and $\thresholdlower$ is the threshold function dependent on the oracle state $\oraclestate$ and $\dpindex$. Note that the exact dependence on the incentive is discussed later. We propose a stochastic approximation algorithm to estimate the optimal stationary policy with a threshold structure. We propose a multi-armed bandits based approach with finite-time regret bounds in Theorem~\ref{th:regret}. The optimal stationary policy with a threshold structure is benchmarked in a numerical study for covert federated hate-speech classification.
}

 {
\textbf{Motivation: } 
The main application of covert (or \textit{learner-private}) optimization is in centralized distributed optimization~\cite{xu_learner-private_2023,tsitsiklis_private_2021,shi_finite-time_2023}.  One motivating example is in pricing optimization and inventory management, the learner (e.g., e-retailer) queries the distributed oracle (e.g., customers) pricing and product preferences to estimate the optimal price and quantity of a product to optimize the profit function~\cite{tsitsiklis_private_2021,bersani_distributed_2019}. A competitor could spoof as a customer and use the optimal price and quantity for their competitive advantage. Our numerical experiment illustrates another application in federated learning, a form of distributed machine learning.

The current literature on covert optimization has been focused on deriving upper and lower bounds on the query complexity for a given obfuscation level~\cite{xu_learner-private_2023}. Query complexity for binary and convex covert optimization with a Bayesian eavesdropper has been studied in~\cite{tsitsiklis_private_2021,xu_learner-private_2023}. These bounds assume a static oracle and a random querying policy can be used to randomly obfuscate and learn. In contrast, the authors have looked at dynamic covert optimization where stochastic control is used to query a stochastic oracle optimally~\cite{jain_controlling_2024}. This is starkly different than the current literature since a stochastic oracle models situations where the quality of gradient responses may vary (e.g., due to Markovian client participation). The success of a response can be determined by the learner (e.g., based on gradient quality~\cite{jain_controlling_2024}) or by the oracle (e.g., based on computational resources availability). 

\textbf{Differences from previous work~\cite{jain_controlling_2024}}: To prove that the optimal policy has a monotone threshold structure,~\cite{jain_controlling_2024} requires supermodularity conditions. This paper proves results under more relaxed conditions using interval dominance~\cite{quah_comparative_2009} in Theorem~\ref{th:structural}, which can incorporate convex cost functions and more general transition probabilities~\cite{krishnamurthy_interval_2023}. The action space in this work includes an incentive the learner provides to the oracle.  An incentive that the learner pays is motivated by the learner's cost for obtaining a gradient evaluation of desired quality, it could be a monetary compensation the learner pays or non-monetary, e.g., controlling latency of services to participating clients ~\cite{witt_decentral_2023}. We had a generic cost function in~\cite{jain_controlling_2024}, but the costs considered in this paper are exact regarding the learner's approximation of the eavesdropper's estimate of $\learnerestimate$.
}
\vspace{-3mm}
\section{Covert optimization for first-order stochastic gradient descent}\label{sec:covertopt}

This section describes the two stochastic gradient algorithms, between which the learner dynamically switches to either learn or obfuscate using the MDP formulation of the next section. This section states the assumptions about the oracle, the learner, the eavesdropper, and the obfuscation strategy. We state the result on the number of successful gradient steps the learner needs to achieve the learning objective. The problem formulation for covert optimization is illustrated in Fig.~\ref{fig:sysmodel}.

\vspace{-4mm}
\subsection{Oracle}
The oracle evaluates the gradient of the function $\function$. The following is assumed about the oracle and the function $\function$, 
\begin{enumerate}[start=1,label={\bfseries O\arabic*:}]
    \item Function $\function:\R^\dimfunc \to \R$ is continously differentiable and is lower bounded by $\function^*$. Function $\function$ is $\lipschitz$-Lipschitz continuous, $\Vert \gradient \function(\functionvar) - \gradient\function(\functionvaralt)\Vert \leq \lipschitz \Vert \functionvar - \functionvaralt \Vert \ \forall \functionvar,\functionvaralt \in \R^\dimfunc$.
    \item At time $\timeindex$, the oracle is in state $\oraclestate_\timeindex \in \{1,\dots,\oraclelevels\}$, { where $\oraclelevels$ are the number of oracle states} and for the incentive $\price_\timeindex \in \{ \price^1,\dots, \price^\numprices \}$, replies {with probability} $\succfunction(\oraclestate_\timeindex,\price_\timeindex)$. $\succreply_\timeindex \sim
    \text{Bernoulli}(\succfunction(\oraclestate_\timeindex,\price_\timeindex))$ 
    denotes success of the reply.
    \item For a reply with success $\succreply_\timeindex$ to the query $\query_\timeindex \in \R^\dimfunc$, the oracle returns a noisy gradient response $\response_\timeindex$ according to~\eqref{eq:oraclereply}. The noise terms $\noise_\timeindex$ are independent~\footnote{A slightly weaker assumption based on conditional independence was considered in our paper~\cite{jain_controlling_2024}. We consider independence here for brevity.}, have zero-mean and finite-variance, $\expectation[\response_\timeindex ] = \gradient\function(\query_\timeindex)$ and $\expectation[\Vert\noise_\timeindex\Vert^2] \leq \noisevariance$. 
\end{enumerate}
O1 and O3 are standard assumptions for analyzing oracle-based gradient descent~\cite{ghadimi_stochastic_2013}. O2 is motivated by an oracle with a stochastic state (e.g., client participation), and the success is determined by the oracle or by the learner.  
\begin{figure}
    \centering
    \vspace{1mm}
    \tikzset{every picture/.style={line width=0.75pt}} 

\begin{tikzpicture}[x=0.75pt,y=0.75pt,yscale=-1,xscale=1]

\draw   (353,145) -- (430,145) -- (430,192) -- (353,192) -- cycle ;
\draw    (352,180) -- (223,180) ;
\draw [shift={(221,180)}, rotate = 360] [color={rgb, 255:red, 0; green, 0; blue, 0 }  ][line width=0.75]    (10.93,-3.29) .. controls (6.95,-1.4) and (3.31,-0.3) .. (0,0) .. controls (3.31,0.3) and (6.95,1.4) .. (10.93,3.29)   ;
\draw    (348,161.5) -- (219,161.5) ;
\draw [shift={(350,161.5)}, rotate = 180] [color={rgb, 255:red, 0; green, 0; blue, 0 }  ][line width=0.75]    (10.93,-3.29) .. controls (6.95,-1.4) and (3.31,-0.3) .. (0,0) .. controls (3.31,0.3) and (6.95,1.4) .. (10.93,3.29)   ;
\draw   (130,146) -- (216,146) -- (216,193) -- (130,193) -- cycle ;
\draw  [color={rgb, 255:red, 255; green, 0; blue, 0 }  ,draw opacity=1 ][dash pattern={on 2.53pt off 3.02pt}][line width=2.25]  (222,80) -- (351,80) -- (351,131.5) -- (222,131.5) -- cycle ;

\draw (356,152) node [anchor=north west][inner sep=0.75pt]   [align=left] {Oracle ($\Tilde{\nabla}\function$)};
\draw (305,109.9) node [anchor=north west][inner sep=0.75pt]    {$\query_{\timeindex}$};
\draw (330,208.9) node [anchor=north west][inner sep=0.75pt]    {$\succreply_\timeindex$};
\draw (322,193.9) node [anchor=north west][inner sep=0.75pt]    {$\response_{\timeindex}$};
\draw (315,140.9) node [anchor=north west][inner sep=0.75pt]    {$\learnaction_{\timeindex}$};
\draw (384,168.4) node [anchor=north west][inner sep=0.75pt]    {$\oraclestate_{\timeindex}$};
\draw (225,138.5) node [anchor=north west][inner sep=0.75pt]   [align=left] {Type of Query};
\draw (257,109.5) node [anchor=north west][inner sep=0.75pt]   [align=left] {Query};
\draw (232,190.5) node [anchor=north west][inner sep=0.75pt]   [align=left] {Noisy Gradient};
\draw (225,206.5) node [anchor=north west][inner sep=0.75pt]   [align=left] {Response Success};
\draw (150,151) node [anchor=north west][inner sep=0.75pt]   [align=left] {Learner};
\draw (166,166.4) node [anchor=north west][inner sep=0.75pt]    {$\learnerstate_{\timeindex}$};
\draw (305,87) node [anchor=north west][inner sep=0.75pt]    {$\price_{\timeindex}$};
\draw (244,87.5) node [anchor=north west][inner sep=0.75pt]   [align=left] {Incentive};
\draw (210,65) node [anchor=north west][inner sep=0.75pt]   [align=left] {\textit{Visible to Eavesdropper}};

\end{tikzpicture}
    \vspace{-6mm}
    \caption{ Dynamic Covert Optimization: { Learner sends query $\query_\timeindex$ and incentive $\price_\timeindex$ to oracle in state $\oraclestate_\timeindex$. The oracle evaluates noisy gradient of $\function$ at $\query_\timeindex$, $\response_\timeindex$ according to~\eqref{eq:oraclereply}. An eavesdropper observes $\query_\timeindex$ and $\price_\timeindex$ and aims to approximate the learner's estimate. The learner needs to control the incentive $\price_\timeindex$ and type of SG ($\learnaction_\timeindex$) to query using~\eqref{eq:query}  to achieve the learning objective of~\eqref{eq:learnerobjective} and obfuscate the eavesdropper with belief~\eqref{eq:eavesestimate}.}}
    \label{fig:sysmodel}
    \vspace{-7mm}
\end{figure}
\vspace{-2mm}
\subsection{Learner}

Similar to oracle-based first-order gradient descent~\cite{ghadimi_stochastic_2013}, the learner aims to estimate $\learnerestimate\in \R^\dimfunc$ which is a $\epsilon$-close critical point of the function $\function$,
\begin{align}\label{eq:learnerobjective}
 \expectation\left[ \Vert \nabla \function( \learnerestimate)\Vert^2 \right]\leq \epsilon.
\end{align}
Since $\function$ is non-convex and not known in closed-form to the learner, in general, the gradient at $\functionvaralt_1$ is non-informative about the gradient at $\functionvaralt_2$ far from $\functionvaralt_1$. Hence, at time $\timeindex$, the learner can either send a learning or an obfuscating query. 
We propose controlling the gradient descent of the learner by the query action $\learnaction_\timeindex \in \{ 0 = \texttt{obfuscating}, 1 = \texttt{learning}\}$. While learning, the learner updates its estimate, $\learnerestimate_\timeindex$ by performing the controlled stochastic gradient step of~\eqref{eq:controlledgradientstep}. 
Here, $\stepsize_\timeindex$ is the step size chosen to be constant in this paper. 
In the next section, we will formally state the action space composed of the type of query $\learnaction_\timeindex$ and the incentive $\price_\timeindex$. In order to estimate the number of queries to the oracle that the learner has to spend on learning queries, we first define the successful gradient step. We then state the result on the order of the number of successful gradient steps required for achieving the objective. 
\begin{definition}~\label{def:succgradientstep}\textbf{(Successful Gradient Step) } A gradient step of \eqref{eq:controlledgradientstep} is successful when the learner queries the oracle with a learning query ($\learnaction_\timeindex = 1$) and gets a successful reply ($\succreply_\timeindex = 1$). 
\end{definition}
\begin{theorem}\label{th:numsuccupdates}
For an oracle with assumptions (O1-O3), to obtain an estimate $\learnerestimate$ which achieves the objective~\eqref{eq:learnerobjective}, the learner needs to perform $\numsuccessfulupdates$ successful gradient steps (Def.~\ref{def:succgradientstep}) with a step size ($\stepsize = \min(\frac{1}{\lipschitz},\frac{\epsilon}{2\sigma^2\lipschitz})$) where $\numsuccessfulupdates$ is of the order, $O\left(\frac{\noisevariance}{\epsilon^2} + \frac{1}{\epsilon}\right).$ {The exact expression is $M = \max\left({\frac{4F\lipschitz}{\epsilon},\frac{8F\lipschitz\noisevariance}{\epsilon^2}}\right)$
where $F =(\expectation \function (\functionvar_0) - \function^*)$.}
\end{theorem}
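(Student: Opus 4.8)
The plan is to adapt the standard descent analysis for smooth non-convex stochastic gradient descent~\cite{ghadimi_stochastic_2013} to the subsequence of successful gradient steps (Def.~\ref{def:succgradientstep}). The key observation is that the learner estimate $\learnerestimate_\timeindex$ in~\eqref{eq:controlledgradientstep} is unchanged unless $\learnaction_\timeindex=1$ and $\succreply_\timeindex=1$: an obfuscating query ($\learnaction_\timeindex=0$) zeroes the relevant row of the indicator matrix, and a learning query with a non-informative reply sets $\response_\timeindex=\textbf{0}$, so in both cases $\learnerestimate_{\timeindex+1}=\learnerestimate_\timeindex$. Hence I can re-index and analyze only the $\numsuccessfulupdates$ successful steps, on which $\learnerestimate_{\timeindex+1}=\learnerestimate_\timeindex-\stepsize\,\response_\timeindex$ with $\response_\timeindex=\gradient\function(\learnerestimate_\timeindex)+\noise_\timeindex$ by~\eqref{eq:query} and O3.

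On this subsequence I would first invoke the $\lipschitz$-smoothness descent inequality implied by O1, namely $\function(\learnerestimate_{\timeindex+1})\leq\function(\learnerestimate_\timeindex)-\stepsize\langle\gradient\function(\learnerestimate_\timeindex),\response_\timeindex\rangle+\tfrac{\lipschitz\stepsize^2}{2}\Vert\response_\timeindex\Vert^2$, and take conditional expectation over $\noise_\timeindex$. Using $\expectation[\noise_\timeindex]=0$ and $\expectation[\Vert\noise_\timeindex\Vert^2]\leq\noisevariance$ from O3, the cross term collapses to $-\stepsize\Vert\gradient\function(\learnerestimate_\timeindex)\Vert^2$ and the quadratic term is bounded by $\tfrac{\lipschitz\stepsize^2}{2}(\Vert\gradient\function(\learnerestimate_\timeindex)\Vert^2+\noisevariance)$, giving
\begin{align*}
\expectation[\function(\learnerestimate_{\timeindex+1})] \leq \function(\learnerestimate_\timeindex) - \stepsize\Big(1-\tfrac{\lipschitz\stepsize}{2}\Big)\Vert\gradient\function(\learnerestimate_\timeindex)\Vert^2 + \tfrac{\lipschitz\stepsize^2\noisevariance}{2}.
\end{align*}
Since $\stepsize\leq 1/\lipschitz$, the coefficient satisfies $1-\lipschitz\stepsize/2\geq 1/2$. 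I would then telescope over the $\numsuccessfulupdates$ successful steps, apply the lower bound $\function\geq\function^*$ from O1, and output $\learnerestimate$ as the iterate drawn uniformly from the successful iterates (equivalently, the minimum-gradient-norm iterate). Writing $\thresholdcount=\expectation\function(\functionvar_0)-\function^*$, this yields
\begin{align*}
\expectation[\Vert\gradient\function(\learnerestimate)\Vert^2] \leq \frac{2\thresholdcount}{\stepsize\numsuccessfulupdates} + \lipschitz\stepsize\noisevariance.
\end{align*}

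Finally, substituting the stated step size $\stepsize=\min(1/\lipschitz,\epsilon/(2\noisevariance\lipschitz))$ splits the bound into two regimes. When $\epsilon\geq 2\noisevariance$ the active choice is $\stepsize=1/\lipschitz$, the variance term equals $\noisevariance\leq\epsilon/2$, and driving the first term below $\epsilon/2$ needs $\numsuccessfulupdates\geq 4\thresholdcount\lipschitz/\epsilon$; otherwise $\stepsize=\epsilon/(2\noisevariance\lipschitz)$ makes the variance term exactly $\epsilon/2$ and forces $\numsuccessfulupdates\geq 8\thresholdcount\lipschitz\noisevariance/\epsilon^2$. Taking the maximum recovers the exact expression, and the $O(\noisevariance/\epsilon^2+1/\epsilon)$ order follows immediately. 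The only genuine subtlety is the restriction to the successful-step subsequence: I must verify that obfuscation steps and failed replies neither advance nor corrupt $\learnerestimate_\timeindex$, so that the telescoping sum runs cleanly over exactly $\numsuccessfulupdates$ informative updates. Once that is established, the remainder is the textbook descent argument, and matching the constants to the two-regime step size is routine bookkeeping.
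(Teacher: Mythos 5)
Your proposal is correct and follows essentially the same route the paper relies on: it defers to the standard Ghadimi--Lan descent analysis for smooth non-convex SGD, restricted to the subsequence of successful gradient steps, which is exactly the argument in the cited references. Your two-regime substitution of $\stepsize = \min(\nicefrac{1}{\lipschitz},\nicefrac{\epsilon}{2\noisevariance\lipschitz})$ correctly recovers the stated constants $\max(\nicefrac{4F\lipschitz}{\epsilon},\nicefrac{8F\lipschitz\noisevariance}{\epsilon^2})$, and your observation that obfuscating queries and failed replies leave $\learnerestimate_\timeindex$ unchanged is precisely the point that justifies counting only successful steps.
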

\textit{Proof for a general setting can be found in~\cite{jain_controlling_2024} and in~\cite{ghadimi_stochastic_2013}. } Theorem~\ref{th:numsuccupdates} characterizes the number of successful gradient steps, $\numsuccessfulupdates$ that the learner needs to perform to achieve the learning objective of~\eqref{eq:learnerobjective}. {Theorem~\ref{th:numsuccupdates} guarantees the existence of a finite queue state space in the next section, which models the number of successful gradient steps left to be taken. $\numsuccessfulupdates$ in the MDP formulation of the next section can be chosen heuristically or be computed exactly if the parameters of the function are known. It also shows that $\numsuccessfulupdates$ is inversely dependent on $\epsilon$ and incorporates the descent dynamics in the structure of the optimal policy. }  
\subsection{Obfuscation Strategy}
{Based on the chosen SG $\learnaction_\timeindex$, the learner poses queries using~\eqref{eq:query}} and provides incentives to the oracle. To obfuscate the eavesdropper, the learner runs a parallel stochastic gradient with synthetic responses, $\Bar{\response}_\timeindex$. {The synthetic responses can be generated by suitably simulating an oracle, for e.g., the learner can train a neural network separately with an unbalanced subset of as was done in~\cite{jain_controlling_2024}. If the learner is sure that the eavesdropper has no public dataset to validate, the learner can simply take mirrored gradients with \eqref{eq:oraclereply}. When obfuscating, the learner poses queries from the estimates of the second SG, $\obfuscatingestimate_\timeindex$.}

The parallel stochastic gradient ensures that the eavesdropper cannot infer the true learning trajectory from the shape of the trajectory. { In summary, the learner obfuscates and learns by dynamically chooses the query $\query_\timeindex$, as the current estimate $\learnerestimate_{\timeindex}$ from the controlled stochastic gradient step or as the estimate $\obfuscatingestimate_{\timeindex}$ of parallel SG.} We assume that the learner queries such that the two trajectories are sufficiently separated from each other, and the eavesdropper can cluster the queries and distinguish them uniquely into two trajectories as described next.
\vspace{-6mm}
\subsection{Eavesdropper}
At time $\timeindex$, the eavesdropper observes query $\query_\timeindex$ and the incentive $\price_\timeindex$ by the learner. We use the following assumptions to approximate the posterior belief of the eavesdropper,
\begin{enumerate}[start=1,label={\bfseries E\arabic*:}]
    \item The eavesdropper is passive, does not affect the oracle, and can not observe the oracle's responses, $\response_\timeindex$. 
    \item For $\timeindex>0$, the eavesdropper can classify each query of the observed query sequence $(\query_1,\dots,\query_\timeindex)$ into two unique trajectory sequences, $\truetrajectory^\timeindex$ and $\falsetrajectory^\timeindex$ which can be separated by a hyperplane in $\R^\dimfunc$. 
    \item The eavesdropper knows that either of $\truetrajectory^\timeindex$ and $\falsetrajectory^\timeindex$ is a learning trajectory. Given the information till time $\timeindex$, the eavesdropper computes a posterior belief, $\trajectoryprobability_\timeindex$ for trajectory $\truetrajectory^\timeindex$ being the learning trajectory as, 
    \begin{align*}
        \trajectoryprobability_\timeindex = \probability(\learnerestimate \in \truetrajectory^\timeindex|(\query_1,\price_1,\dots,\query_\timeindex,\price_\timeindex)).
    \end{align*}
\end{enumerate}
Assumption E1 holds if the eavesdropper is considered an insignificant part of the oracle and can not observe the aggregated response.
Assumptions E2 and E3 ensure the dynamic obfuscation is for a worst-case eavesdropper{~\footnote{As mentioned above, it is assumed that the queries are posed such that the two trajectories are sufficiently separated (by a metric known to the eavesdropper). One of the trajectories can be empty for the initial queries.}}.
We consider the eavesdropper using an incentive-weighed proportional sampling estimator for the posterior $\trajectoryprobability_\timeindex$~\cite{xu_learner-private_2023,jain_controlling_2024}, 
\begin{align}\label{eq:eavesestimate}
    \trajectoryprobability_\timeindex  = \frac{\sum_{1 \leq \timeindexalt \leq \timeindex} \price_\timeindexalt \indicator(\query_\timeindexalt \in \truetrajectory^\timeindexalt)}{\sum_{1 \leq \timeindexalt \leq \timeindex} \price_\timeindexalt}.
\end{align}
The learner has access to the queries and the incentives and uses~\eqref{eq:eavesestimate} as an approximation for the eavesdropper's $\argmin$ estimate to compute the cost incurred when learning. The following section formulates an MDP to perform covert optimization using stochastic control. $\numsuccessfulupdates$ from Theorem~\ref{th:numsuccupdates} and oracle state are used to model the state space, while the incentives $\price_\timeindex$ and {the type of SG $\learnaction_\timeindex$ in~\eqref{eq:controlledgradientstep}} model the action space.
\section{{MDP} for achieving covert optimization}\label{sec:mdp}

We formulate a finite-horizon MDP to solve the learner's decision problem. The learner chooses an incentive, and dynamically either minimizes the function using the estimate $\learnerestimate_\timeindex$ or obfuscates the eavesdropper using $\obfuscatingestimate_\timeindex$. The learner wants to perform $\numsuccessfulupdates$ successful gradient steps in $\numqueries$ total queries. Using interval dominance, we show that the optimal policy of the finite-horizon MDP has a threshold structure. {The stochastic control approach for the same is described in Algorithm~\ref{alg:stochasticcontrol}.}
\vspace{-3mm}
\subsection{MDP formulation for optimally switching between stochastic gradient algorithms}\label{sec:mdpform}
\begin{algorithm}[h!]

    \begin{algorithmic}
    \State \textbf{Input: } Policy $\policy$, Queries $\numqueries$, Successful Gradient Steps $\numsuccessfulupdates$
    \State Initialize learner queue state $\learnerstate_\numqueries = \numsuccessfulupdates$
\For{$\timeindex$ in $1,\dots,\numqueries$}
        \State Obtain type of SG and incentive, $(\learnaction_\timeindex,\price_\timeindex) = \policy(\oraclestate_\timeindex,\learnerstate_\timeindex)$
        \State Incur cost $\cost((\learnaction_\timeindex,\price_\timeindex),(\oraclestate_\timeindex,\learnerstate_\timeindex))$ from~\eqref{eq:learningcost}
        \State Query oracle using query $\query_\timeindex$~\eqref{eq:query} and incentive $\price_\timeindex$
        \State Receive response $\response_\timeindex$ and success of reply $\succreply_\timeindex$
        \State Update estimates of the two SGs using~\eqref{eq:controlledgradientstep}.
        \State \textbf{if }{$\succreply_\timeindex$=1} \textbf{then }$\learnerstate_{\timeindex+1} = \learnerstate_{\timeindex} - \succreply_\timeindex$
        \State Oracle state evolves, $\oraclestate_{\timeindex+1} \sim \oracleevolutionprob{\oraclestate_\timeindex}{\cdot}$
        \EndFor{}
        \State Incur terminal cost $\queuecost(\learnerstate_0)$
    \end{algorithmic}
    \caption{Stochastic Control for Covert Optimization}
    \label{alg:stochasticcontrol}
\end{algorithm}
\vspace{-1mm}
The dynamic programming index, $\dpindex = \numqueries, \dots, 0$ denotes the number of queries left and decreases with time $\timeindex$.

\textbf{State Space: }
The state space is denoted by $\statespace = \statespace^\learnersymbol \times \statespace^\oraclesymbol$ where $\statespace^\learnersymbol = \{ 0, 1, \dots , \numsuccessfulupdates \}$ is the learner queue state space and $\statespace^\oraclesymbol = \{ 0, 1, \dots, \oraclelevels \}$ is the oracle state space. The learner queue state $\learnerstate_\dpindex \in \statespace^\learnersymbol$ denotes the number of successful gradient steps (Def.~\ref{def:succgradientstep}) remaining to achieve~\eqref{eq:learnerobjective}.  The oracle state space $\statespace^\oraclesymbol$ discretizes the stochastic state of the oracle into $\oraclelevels$ levels (e.g., percentages of client participation in FL). $\statevar_\dpindex$ denotes the state with $\dpindex$ queries remaining.

\textbf{Action Space: }
The action space is $\actionspace = \{0 = \texttt{obfuscate}, 1 = \texttt{learn} \}\times \{\price^1,\dots,\price^\numprices\}$. The action when $\dpindex$ queries are remaining is given by, $\action_\dpindex = (\learnaction_\dpindex,\price_\dpindex)$ where $\learnaction_\dpindex \in \{0 = \texttt{obfuscate}, 1 = \texttt{learn} \}$ is the type of the query and $\price_\dpindex \in \{\price^1,\dots,\price^\numprices\}$ is the incentive. To derive structural results on the optimal policy, we consider the following transformation of the action space, $\actionspace = \{ (0,\price^1), \dots, (0,\price^\numprices), (1,\price^1), \dots, (1,\price^\numprices)  \}$. A deterministic policy for the finite-horizon MDP is denoted by $\policy$, a sequence of functions $\policy = (\action_\dpindex: \dpindex=0,\dots,\numqueries)$. Here, $\action_\dpindex:\statespace\to\actionspace$ maps the state space to the action space. $\policyspace$ denotes the space of all policies.

\textbf{Transition Probabilities: }
We assume that the evolution of the oracle state and the learner queue state is Markovian. The oracle state evolves independently of the queue state evolution. In case of a successful gradient step (Def.~\ref{def:succgradientstep}), the queue decreases by one, and the oracle state evolves to a state $\oraclestate^\prime \in \statespace^\oraclesymbol$ with probability $\oracleevolutionprob{\oraclestate}{\oraclestate^\prime} {> 0}$.  {Let $\transitionqueue{}(\action) = \nicefrac{\probability((\oraclestate^\prime,\cdot)|\oraclestate,\learnerstate,\action)}{\oracleevolutionprob{\oraclestate}{\oraclestate^\prime}}$ denote the transition probability vector of the buffer state with future oracle state $\oraclestate^\prime$ given $(\oraclestate,\learnerstate,\action)$.} The transition probability from the state $\statevar = (\oraclestate,\learnerstate) \in \statespace$ to state $\statevar^\prime = (\oraclestate^\prime,\learnerstate^\prime) \in \statespace$ with action $\action = (\learnaction,\price)$ can be written as,
\begin{align}\label{eq:transitionprobability}
\begin{split}
    &\probability(\statevar^\prime =(\oraclestate^\prime,\learnerstate-1)|\statevar,\action) = \oracleevolutionprob{\oraclestate}{\oraclestate^\prime}\succfunction(\oraclestate,\price)\indicator(\learnaction) \ \forall \oraclestate^\prime,
    \\
    &\probability(\statevar^\prime = (\oraclestate,\learnerstate)|\statevar,\action) = \left(1 -\succfunction(\oraclestate,\price)\right)\indicator(\learnaction) + \left(1 - \indicator(\learnaction)\right),
\end{split}
\end{align}
and is $0$ otherwise.
The first equation corresponds to a successful gradient step, and the second to an unsuccessful one. We assume that $\succfunction(\oraclestate,\price)$ (from O2) is increasing in incentive $\price$.

\textbf{Learning and Queueing Cost: } The learning cost $\cost_\dpindex: \statespace\times\actionspace \to \R$, is the cost (with $\dpindex$ queries remaining) incurred after every action due to learning at the expense of reduced obfuscation. We consider the following learning cost which is proportional to the logarithm of the improvement in the eavesdropper's estimate ($\propto\log(\nicefrac{\trajectoryprobability_\dpindex}{\trajectoryprobability_{\dpindex+1}})$) and is given by,  
\begin{align}\label{eq:learningcost}
    \begin{split}
        \cost_\dpindex(\statevar_\dpindex,\action_\dpindex) &= \frac{\buffercost(\learnerstate_\dpindex)}{{\oraclecost(\oraclestate_\dpindex)}}  \log \left(\frac{\previousincentivesum + \nicefrac{\price_\dpindex}{\trajectoryprobability_\dpindex}}{\previousincentivesum + \price_\dpindex} \right)\indicator(\learnaction_\dpindex) \\
             + &\frac{ \oraclecost(\oraclestate_\dpindex)}{\buffercost(\learnerstate_\dpindex)} \log \left(\frac{\previousincentivesum}{\previousincentivesum + \price_\dpindex} \right)(1-\indicator(\learnaction_\dpindex))
    \end{split},
\end{align}
{where $\buffercost:\statespace^\learnersymbol\to\R^{+}$ and $\oraclecost:\statespace^\oraclesymbol\to\R^{+}$ are positive, convex and increasing cost functions, $\previousincentivesum = \sum_{\timeindex=\numqueries-1}^{\dpindex+1} \price_\timeindex$ is the sum of the previous incentives and $\trajectoryprobability_\dpindex$ is the eavesdropper's estimate of the trajectory $\truetrajectory$ being the true trajectory computed using (\ref{eq:eavesestimate}). $\buffercost$ and $\oraclecost$ are used to incorporate the cost with respect to the oracle and queue state, e.g., the functions $\buffercost$, and $\oraclecost$ are considered quadratic in the respective states in the experiments. The form of the fractions ensures the structure as discussed next.}
The first term in~\eqref{eq:learningcost} denotes the cost incurred in a learning query and is non-negative ($ 0\leq\trajectoryprobability_\timeindex\leq1$). The second term corresponds to an obfuscating query and is non-positive. The cost increases with the queue state and decreases with the oracle state. This incentivizes the learner to drive the system to a smaller queue and learn when the oracle is in a good state. {After $\numqueries$ queries, the learner pays a terminal queue cost computed using the function $\queuecost:\statespace\to\R$.} The queue cost accounts for learning loss in terms of terminal successful gradient steps left, $\learnerstate_0$. 

\textit{Remark: } The incentive improves the response probability $\succfunction$, but also allows for improved obfuscation than a non-incentivized setup (a high incentive can be used to misdirect the eavesdropper's belief in~\eqref{eq:eavesestimate}). 
\vspace{-4mm}
\subsection{Optimization problem}
The expected total cost for the finite-horizon MDP with the initial state $\statevar_\numqueries \in \statespace$ and policy $\policy$ is given by, 
\begin{align}\label{eq:cummalativecost}
\valuefn^\policy(\statevar_\numqueries) =  \expectation\left[ \frac{1}{\numqueries}\sum_{\dpindex=1}^{\numqueries} \cost_\dpindex\left(\statevar_\dpindex,\action_\dpindex\right) + \queuecost(\statevar_0,\action_0)\mid\statevar_\numqueries,\policy\right].
\end{align}
The optimization problem is to find the optimal policy $\optpolicy$, 
\begin{align}\label{eq:mdp}
    \valuefn^{\optpolicy}(\statevar) = \inf_{\policy \in \policyspace} \valuefn^{\policy}(\statevar) \ \forall \ \statevar\in\statespace.
\end{align}
To define the optimal policy using a recursive equation, we first define the value function, $\valuefn_\dpindex$ with $\dpindex$ queries remaining, 
\begin{align}\label{eq:valuefn}
    \valuefn_\dpindex(\statevar) = \underset{\action \in \actionspace}{\min} \left(\ \cost_\dpindex(\statevar,\action) + \sum_{\statevar^\prime \in \statespace}\probability(\statevar^\prime|\statevar)\valuefn_{\dpindex-1}(\statevar^\prime)\right).
\end{align}
Let the optimal policy be $\optpolicy = (\optaction_\dpindex)_{\dpindex=N}^{1}$, where $\optaction(\cdot)$ is the optimal action with $\dpindex$ remaining queries and is the solution of the following stochastic recursion (Bellman's equation),
\begin{align}\label{eq:optpolicy}
\optaction_\dpindex(\statevar) = \underset{\action \in \actionspace}{\argmin} \  \qfunction_\dpindex(\action,\statevar),
\end{align}
where the Q-function $\qfunction_\dpindex$ is defined as,
\begin{align}\label{eq:qfunc}
\qfunction_{\dpindex}(\action,\statevar) = \cost_\dpindex(\action,\statevar) + \sum_{\statevar^\prime \in \statespace} \probability(\statevar^\prime|\statevar,\action)\valuefn_{\dpindex - 1}(\statevar^\prime),
\end{align}
with $\dpindex = 0, \dots, \numqueries$ and $\valuefn_0(\statevar) = \queuecost(\statevar)$.
If the transition probabilities are unknown, then  Q-learning can be used to estimate the optimal policy of~\eqref{eq:optpolicy}. However, the following subsection shows that the optimal policy has a threshold structure, which motivates efficient policy search algorithms. 
\vspace{-6mm}
\subsection{Structural Results}
\vspace{-1mm}
The following is assumed to derive the structural results,
\begin{enumerate}[start=1,label={\bfseries R\arabic*:}]
\item The learning cost, $\cost_\dpindex$ is $\increasing$ (increasing) and convex in the buffer state, $\queuecost_\dpindex$ for each action $\action_\dpindex \in \actionspace$. 
\item Transition probability matrix $\probability(\learnerstate^\prime|\learnerstate,\oraclestate,\action)$ is TP3\footnote{Totally postive of order 3 (TP3) for a matrix P(a) requires that each of 3rd order minor of P(a) is non-negative.} with $\sum_{\learnerstate^\prime}\learnerstate^\prime\probability(\learnerstate^\prime|\learnerstate,\oraclestate,\action) \increasing \learnerstate$ and convex in $\learnerstate$.
\item The terminal cost, $\queuecost$ is $\increasing$ and convex in the queue state, $\learnerstate$.
\item For $\idfactorone_{\learnerstate^{\prime},\learnerstate,\action}>0$ and $\increasing \action$, $\cost(\learnerstate^{\prime},\oraclestate,\action+1)-\cost(\learnerstate^{\prime},\oraclestate,\action)\leq \idfactorone_{\learnerstate^{\prime},\learnerstate,\action}(\cost(\learnerstate,\oraclestate,\action+1)-\cost(\learnerstate,\oraclestate,\action))$, $\learnerstate^{\prime}> \learnerstate$.
\item For $\idfactortwo_{\learnerstate^{\prime},\learnerstate,\action}>0$ and $\increasing \action$, 

$\frac{\transitionqueue{\prime}(\action + 1)+ \idfactortwo_{\learnerstate^{\prime},\learnerstate,\action}\transitionqueue{\prime}(\action) }{1 + \idfactortwo_{\learnerstate^{\prime},\learnerstate,\action}} <_{c}\frac{\transitionqueue{}(\action)+ \idfactortwo_{\learnerstate^{\prime},\learnerstate,\action}\transitionqueue{}(\action+1) }{1 + \idfactortwo_{\learnerstate^{\prime},\learnerstate,\action}}$

$,\learnerstate^{\prime}>\learnerstate, \forall \oraclestate^\prime,\oraclestate \in \statespace^\oraclesymbol$; $<_c$ denotes convex dominance~\footnote{\label{fn:cvxdom}Probability vector $p$ is convex dominated by probability vector $q$ iff $f'p \geq f'q$ for increasing and convex vector $f$.}.
\item There exist $\idfactorone_{\learnerstate^{\prime},\learnerstate,\action}= \idfactortwo_{\learnerstate^{\prime},\learnerstate,\action}$ s.t. (R4) and (R5) hold.
\end{enumerate}
Assumptions (R1) and (R3) are true by the construction of cost in~\eqref{eq:learningcost} and the terminal cost. (R2) is a standard assumption on bi-diagonal stochastic matrices made when analyzing structural results~\cite{krishnamurthy_interval_2023}. (R4), (R5) and (R6) are the generalization of the supermodularity conditions made previously in~\cite{jain_controlling_2024} and are sufficient for interval dominance~\cite{krishnamurthy_interval_2023}. Assumption (R4) can be verified for cost of~\eqref{eq:learningcost} using algebraic manipulation with $\idfactorone_{\learnerstate^{\prime},\learnerstate,\action} \leq 1$, and (R5) can be shown with $\idfactortwo_{\learnerstate^{\prime},\learnerstate,\action} \leq 1$ for the bi-diagonal matrix of ~\eqref{eq:transitionprobability} with $\succfunction(\oraclestate,\price) \increasing \price$. Therefore (R6) can be satisfied for some $\idfactorthree_{\learnerstate^{\prime},\learnerstate,\action} = \idfactorone_{\learnerstate^{\prime},\learnerstate,\action}= \idfactortwo_{\learnerstate^{\prime},\learnerstate,\action} \leq 1$. We now state the main structural result,
\begin{theorem}\label{th:structural}
    Under assumptions (R1-6), the optimal action $\optaction_\dpindex(\statevar)$ (given by \eqref{eq:optpolicy}) for the finite-horizon MDP of \eqref{eq:mdp} is increasing in the queue state $\learnerstate$.
\end{theorem}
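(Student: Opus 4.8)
The plan is to prove the monotone-policy result by backward induction on the dynamic-programming index $\dpindex$, combined with an interval-dominance argument applied to the $Q$-function $\qfunction_\dpindex$. Two ingredients are needed: first, that the value function $\valuefn_\dpindex$ inherits increasingness and convexity in the queue state $\learnerstate$ from the cost and transition structure; and second, that the adjacent-action differences of $\qfunction_\dpindex$ obey a single interval-dominance inequality in $(\action,\learnerstate)$, so that $\optaction_\dpindex(\statevar)=\argmin_\action \qfunction_\dpindex(\action,\statevar)$ is increasing in $\learnerstate$ by the interval-dominance results of~\cite{quah_comparative_2009,krishnamurthy_interval_2023}. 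The point of using interval dominance rather than supermodularity is that the latter would force the decreasing-differences case (a unit scaling factor), whereas the former permits an arbitrary positive factor together with convex dominance, which is what admits the convex costs of~\eqref{eq:learningcost} and the general oracle transitions.

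First I would establish, by backward induction on $\dpindex$, that $\learnerstate\mapsto\valuefn_\dpindex(\oraclestate,\learnerstate)$ is increasing and convex for each fixed $\oraclestate$. The base case $\valuefn_0=\queuecost$ is increasing and convex by (R3). For the induction step I would use that the stage cost $\cost_\dpindex$ is increasing and convex in $\learnerstate$ by (R1), and that the one-step expectation $\sum_{\statevar^\prime}\probability(\statevar^\prime|\statevar,\action)\valuefn_{\dpindex-1}(\statevar^\prime)$ maps increasing convex functions to increasing convex functions because the queue-transition matrix is TP3 and satisfies $\sum_{\learnerstate^\prime}\learnerstate^\prime\probability(\learnerstate^\prime|\learnerstate,\oraclestate,\action)\increasing\learnerstate$ and convex in $\learnerstate$ by (R2). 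The purpose of this step is that the increasing convex class is exactly the set of test functions defining the convex dominance order $<_c$ used in (R5), so I will be able to instantiate (R5) at $f=\valuefn_{\dpindex-1}$ in the next step.

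Next I would verify the interval-dominance premise for $\qfunction_\dpindex$. Writing the adjacent-action difference at fixed $\oraclestate$ (suppressed) as $\Phi_\dpindex(\action,\learnerstate):=\qfunction_\dpindex(\action+1,\statevar)-\qfunction_\dpindex(\action,\statevar)$ with $\statevar=(\oraclestate,\learnerstate)$, the Bellman recursion~\eqref{eq:qfunc} decomposes it into
\begin{align*}
\Phi_\dpindex(\action,\learnerstate)
&= \big[\cost_\dpindex(\statevar,\action+1)-\cost_\dpindex(\statevar,\action)\big] \\
&\quad + \sum_{\statevar^\prime\in\statespace}\big[\probability(\statevar^\prime|\statevar,\action+1)-\probability(\statevar^\prime|\statevar,\action)\big]\,\valuefn_{\dpindex-1}(\statevar^\prime).
\end{align*}
For $\learnerstate^\prime>\learnerstate$ I would bound the cost part using (R4) with factor $\idfactorone$, and the transition part using (R5): since $\valuefn_{\dpindex-1}$ is increasing and convex from the previous step, the convex-dominance inequality of (R5) evaluated against $\valuefn_{\dpindex-1}$ controls the transition-weighted future-value differences at $\learnerstate^\prime$ relative to $\learnerstate$, with factor $\idfactortwo$. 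Invoking (R6), i.e.\ the common value $\idfactorthree=\idfactorone=\idfactortwo$, I would combine the two bounds into the single interval-dominance inequality $\Phi_\dpindex(\action,\learnerstate^\prime)\leq\idfactorthree\,\Phi_\dpindex(\action,\learnerstate)$ for all $\learnerstate^\prime>\learnerstate$ and adjacent actions $\action,\action+1$. Since $\idfactorthree>0$, this makes the set $\{\learnerstate:\Phi_\dpindex(\action,\learnerstate)\leq 0\}$ upward closed; it is precisely the premise of the interval-dominance monotonicity theorem of~\cite{krishnamurthy_interval_2023}, whose conclusion is that $\argmin_\action\qfunction_\dpindex(\action,(\oraclestate,\learnerstate))$ is increasing in $\learnerstate$, completing the induction and the proof.

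I expect the main obstacle to be twofold. First, the opening induction must preserve convexity of $\valuefn_\dpindex$ through the outer Bellman minimization, and a pointwise minimum of convex functions is not convex in general; discharging this requires exploiting the TP3/bi-diagonal structure of (R2) rather than convexity of the summands alone, and is the technical heart of guaranteeing that $<_c$ can legitimately be instantiated at $\valuefn_{\dpindex-1}$. Second, and more specific to this work, the combination of (R4) and (R5) must be carried out so that a single shared factor $\idfactorthree$ scales both the cost and the transition differences; this is exactly what (R6) supplies and is what upgrades two separately-scaled inequalities into the one interval-dominance inequality on $\qfunction_\dpindex$ that the monotonicity theorem requires. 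Without the common factor the argmin need not be monotone, so checking that (R6) is consistent with the concrete cost~\eqref{eq:learningcost} and transition~\eqref{eq:transitionprobability} (as asserted with $\idfactorthree\leq 1$) is the crux of the argument.
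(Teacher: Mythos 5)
Your proposal follows essentially the same route as the paper's proof: backward induction establishing that $\valuefn_\dpindex$ is increasing and convex in $\learnerstate$ (via (R1)--(R3) and the TP3 transition structure), followed by verification of the interval-dominance inequality \eqref{eq:idcondition} by splitting the $Q$-function difference into a cost term bounded by (R4) and a transition term bounded by (R5) evaluated against the increasing convex value function, then merged via the common factor of (R6); the paper merely presents these steps in the reverse order. Your flagged concern about pointwise minimization over actions not preserving convexity in general is a fair observation — the paper asserts this preservation without elaboration — but it does not change the fact that your overall argument is the same as the paper's.
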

\begin{proof}
    \textbf{Step 1: Conditions of Interval Dominance: }
    
    The following condition with $\idfactorthree_{\learnerstate^\prime,\learnerstate,\action}>0$,
\begin{align}\label{eq:idcondition}
\begin{split}
    \qfunction_\dpindex(\learnerstate^\prime,\action+1) &- \qfunction_\dpindex(\learnerstate^\prime,\action) \leq \\&\idfactorthree_{\learnerstate^\prime,\learnerstate,\action}\left[    \qfunction_\dpindex(\learnerstate,\action+1) - \qfunction_\dpindex(\learnerstate,\action) \right], \learnerstate^\prime>\learnerstate,
    \end{split}
    \end{align}
    is sufficient for $\arg\min \qfunction_\dpindex$ to be increasing in $\learnerstate$ ~\cite{quah_comparative_2009,krishnamurthy_interval_2023},
        \vspace{-2mm}
    \begin{align*}
        \optaction_\dpindex(\learnerstate) = \argmin_{\action \in \actionspace} \qfunction_\dpindex(\learnerstate,\action) \increasing \learnerstate.
    \end{align*}

    We omit the oracle state $\oraclestate$ from the above expression. 
    
    By plugging~\eqref{eq:qfunc} in~\eqref{eq:idcondition} we need to show the following, 
    \begin{align*}
    \begin{split}
&\underbrace{\cost_\dpindex(\learnerstate^\prime,\action+1) - \cost_\dpindex(\learnerstate^\prime,\action) - \idfactorthree_{\learnerstate,\learnerstate^\prime,\action}(\cost_\dpindex(\learnerstate,\action+1) - \cost_\dpindex 
        (\learnerstate,\action))}_{a} \\&+ \sum_{\oraclestate^\prime \in \statespace^\oraclesymbol } \sum_{\learnerstate^{''}}\left[\probability((\oraclestate^\prime,\learnerstate^{''})|(\oraclestate,\learnerstate^\prime),\action+1)  \right.\\&\left.-\probability((\oraclestate^\prime,\learnerstate^{''})|(\oraclestate,\learnerstate^\prime),\action)
        - \idfactorthree_{\learnerstate,\learnerstate^\prime,\action}\left(
    \probability((\oraclestate^\prime,\learnerstate^{''})|(\oraclestate,\learnerstate),\action+1) \right.\right.\\&\left.\left.- \probability((\oraclestate^\prime,\learnerstate^{''})|(\oraclestate,\learnerstate),\action) 
        \right)
\right]\valuefn(\oraclestate^\prime,\learnerstate^{''}) \leq 0, \learnerstate^\prime>\learnerstate
\end{split}.
    \end{align*}
    \vspace{-0.5mm}
By (R4) part (a) of the above inequality is satisfied with constant $\idfactorone_{\learnerstate,\learnerstate^\prime,\action} \leq 1$. The rest of the inequality can be shown using (R5) with a constant $\idfactortwo_{\learnerstate,\learnerstate^\prime,\action} \leq 1$ if we assume the value function is increasing and convex (see n.\ref{fn:cvxdom} and ~\cite{krishnamurthy_interval_2023}). Finally we apply (R6), with $\idfactorone_{\learnerstate,\learnerstate^\prime,\action} = \idfactortwo_{\learnerstate,\learnerstate^\prime,\action} = \idfactorthree_{\learnerstate,\learnerstate^\prime,\action}$ to show that \eqref{eq:idcondition} holds and the optimal action is $\increasing$ in learner state $\learnerstate$.   All that remains to be shown is that the value function is increasing and convex, which we now show using (R1, R2, R3) and induction,

\textbf{Step 2: Value Function  is Increasing in $\learnerstate$: }
By (R3), $\valuefn_0(\statevar) = \queuecost(\learnerstate)$ is increasing in $\learnerstate$. Let $\valuefn_{\dpindex}(\statevar) \increasing \learnerstate$. TP3 (R2) implies TP2 and hence preserves monotone functions~\cite{krishnamurthy_interval_2023}. Therefore by applying preservation of TP2 and linear combination, $\sum_{\oraclestate^\prime \in \statespace^\oraclesymbol }\oracleevolutionprob{\oraclestate}{\oraclestate^{'}}\sum_{\learnerstate^{'}\in\statespace^\learnersymbol}\probability(\learnerstate^{'}|\learnerstate,\oraclestate,\action) \valuefn_\dpindex \increasing \learnerstate$. By (R1) and (\ref{eq:qfunc}), $\qfunction_{\dpindex+1} \increasing \learnerstate$. And therefore by \eqref{eq:valuefn}, $\valuefn_{\dpindex+1}(\statevar) \increasing \learnerstate$.  

\textbf{Step 3: Value Function is Convex in $\learnerstate$: }
    By (R3) $\valuefn_0(\statevar) = \queuecost(\learnerstate)$ is convex in $\learnerstate$. Let $\valuefn_\dpindex$ be convex in $\learnerstate$. Then by (R2) and applying Lemma 1 of~\cite{krishnamurthy_interval_2023} along with preservation of convexity under positive weighted sum, $\sum_{\oraclestate^\prime \in \statespace^\oraclesymbol }\oracleevolutionprob{\oraclestate}{\oraclestate^{'}}\sum_{\learnerstate^{'}\in\statespace^\learnersymbol}\probability(\learnerstate^{'}|\learnerstate,\oraclestate,\action) \valuefn_\dpindex$ is convex in $\learnerstate$. Applying (R1) and (\ref{eq:qfunc}), $\qfunction_{\dpindex+1}$ is convex in $\learnerstate$. Since minimization preserves convexity, $\valuefn_{\dpindex+1} = \min \qfunction_{\dpindex+1}$ is convex in $\learnerstate$.
\end{proof}
{Theorem~\ref{th:structural} implies that the policy is threshold in the learner queue state; hence, the learner learns more aggressively when the number of successful gradient steps (Def.~\ref{def:succgradientstep}) left is more. This intuitively makes sense from an obfuscation perspective since the learner should ideally spend more time obfuscating when it is closer to the minimizer (the queue state is small).} 

Using Theorem~\ref{th:structural}, we can parameterize the optimal policy by the thresholds on the queue state. Although we can construct stochastic approximation for estimating the non-stationary policy, which has a threshold structure and performs computationally better than Q-learning, this approach still requires the number of parameters to be linear in time horizon $\numqueries$. Given this insight, we restrict the search space to stationary policies with a monotone threshold structure, this restriction is common in literature~\cite{jain_controlling_2024,ngo_monotonicity_2010}. 

Let the threshold on queue state for oracle state $\oraclestate$ and action $\action$ be parameterized by $\thresholdlower: \statespace^\oraclesymbol \times \actionspace\to \statespace^\learnersymbol$. The optimal stationary policy with a threshold structure can be written as,
\begin{align}\label{eq:optthresholdpolicy}
    \optpolicy(\statevar) = \sum_{\action \in \actionspace} \action\indicator(\thresholdlower^*(\oraclestate,\action)\leq \learnerstate < \thresholdlower^*(\oraclestate,\action+1)),
\end{align}
where $\thresholdlower^*$ is the optimal threshold function.
\vspace{-1mm}
\section{Estimating the optimal stationary policy with a threshold structure}\label{sec:algs}
In this section, we propose two methods to approximate the optimal stationary policy\footnote{In this section, the optimal stationary policy is referred to as optimal policy.}. for the finite-horizon MDP of~\eqref{eq:mdp} which has the monotone threshold structure of \eqref{eq:optthresholdpolicy}. The first method uses a stochastic approximation to update the parameters over the learning episodes iteratively. The second method uses a multi-armed bandit formulation to perform discrete optimization over the space of thresholds. The proposed methods can be extended to a non-stationary policy space with an increased time and memory complexity. 
\vspace{-4mm}
\subsection{Simultaneous Perturbation Stochastic Approximation}

Taking the thresholds of the stationary policy of ~\eqref{eq:optthresholdpolicy} as the parameters, a simultaneous perturbation stochastic approximation (SPSA) based algorithm can be used to find the parameters for the optimal policy. We update the policy parameters using approximate gradients of the costs computed using perturbed parameters. We use the following sigmoidal approximation for the threshold policy of~\eqref{eq:optthresholdpolicy},
\begin{align}\label{eq:sigmoidalapprox}
    \approxpolicy(\statevar,\thresholdlower) = \sum_{\action \in \actionspace} \frac{1}{1 + \exp\left(\nicefrac{-(\learnerstate - \thresholdlower(\oraclestate,\action))}{\sigmoidalparameter}\right)},
\end{align}
where $\sigmoidalparameter$ is an approximation parameter. The parameters are the $\thresholdcount =|\actionspace||\statespace_\oraclesymbol|$ threshold values and are represented by $\parametersspsa$.
For the optimal parameters, the approximate policy converges to the optimal policy as $\sigmoidalparameter \to 0$~\cite{ngo_monotonicity_2010}. For the learning episode $\spsatimeindex$ and current parameter set $\parametersspsa_\spsatimeindex$, the actions are computed using the current approximate policy~\eqref{eq:sigmoidalapprox}. The policy parameters are perturbed independently with probability~\nicefrac{1}{2} by $\pm\perturbation$. Two learning episodes are performed with each set of perturbed policy parameters ($\parametersspsa_\spsatimeindex^+$,$\parametersspsa_\spsatimeindex^-$). The costs from the two episodes are used to obtain the approximate gradient $\spsaapproxgradient_\spsatimeindex$ by the method of finite differences. The policy parameters are updated using a stepsize $\spsastepsize_\spsatimeindex$,
\begin{align*}
\parametersspsa_{\spsatimeindex+1} = \parametersspsa_\spsatimeindex - \spsastepsize_\spsatimeindex \spsaapproxgradient_\spsatimeindex.
\end{align*}
Under regularity conditions on the noise in the approximate gradients, the approximate policy parameters asymptotically converge in distribution to the set of parameters of the optimal stationary policies with a threshold structure~\cite{kushner_stochastic_2003}. 
The SPSA algorithm can also be used with a constant step size to track changes in the system~\cite{jain_controlling_2024,kushner_stochastic_2003}.
The computational complexity for each learning episode is $O(\thresholdcount+\numqueries)$.
  \vspace{-5mm}
\subsection{Multi-armed bandit approach}
The problem of searching the thresholds for (\ref{eq:optthresholdpolicy}) is solved by considering the values each threshold can take and then taking the product space of the thresholds as bandit arms. Each threshold can take values over learner state space $\statespace_\learnersymbol$, which is of the cardinality $\numsuccessfulupdates+1$.
Consider each permutation of the $\thresholdcount = |\actionspace||\statespace_\oraclesymbol|$ thresholds as an arm, making the total number of arms $(\numsuccessfulupdates+1)^\thresholdcount$. The selection of an arm gives a corresponding stationary policy of the form~\eqref{eq:optpolicy}, and a reward (negative of the cumulative cost of the episode) is obtained by interacting with oracle for time horizon $\numqueries$. The noisy reward is sampled from a distribution centered at the expected value~\eqref{eq:cummalativecost}. \textbf{(B1)} The reward is assumed to be sampled independently for a given policy, and the noise is assumed to be sub-Gaussian~\cite{bubeck_regret_2012}. {For brevity, we omit the definition of regret and the exact upper bound, both of which can be found in Chapter 2 of ~\cite{bubeck_regret_2012}. }Following is the result for the upper bound on the regret for searching the thresholds,
\begin{theorem}\label{th:regret}
    Consider the finite-horizon MDP of (\ref{eq:mdp}) for covert optimization with an oracle (O1-O3) to achieve \eqref{eq:learnerobjective}. The optimal stationary policy with a threshold structure~\eqref{eq:optthresholdpolicy} can be searched using the upper confidence bound algorithm under (B1) with an expected regret after $\numinteractions$ episodes bounded by $O\left(\numsuccessfulupdates^{\thresholdcount}\log{\numinteractions}\right)$, where, $\numsuccessfulupdates$ is of the order $ O(\nicefrac{1}{\epsilon} + \nicefrac{\noisevariance}{\epsilon^2})$ and $\thresholdcount = |\statespace_\oraclesymbol||\actionspace|$ is the number of thresholds.
\end{theorem}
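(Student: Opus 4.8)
The plan is to recast the threshold search as a finite-armed stochastic bandit and then invoke the textbook gap-dependent regret guarantee for the upper confidence bound algorithm. First I would make the arm set explicit: by~\eqref{eq:optthresholdpolicy} a stationary threshold policy is fully specified by its $\thresholdcount = |\statespace_\oraclesymbol||\actionspace|$ thresholds, each of which ranges over the queue state space $\statespace^\learnersymbol = \{0,\dots,\numsuccessfulupdates\}$ of cardinality $\numsuccessfulupdates+1$. Taking the product space of thresholds as the set of arms gives $K = (\numsuccessfulupdates+1)^{\thresholdcount}$ arms, where pulling an arm means running one length-$\numqueries$ episode of the MDP of~\eqref{eq:mdp} under the induced policy and recording the reward as the negative of the realized cumulative cost.

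Next I would argue that this is a well-posed stochastic bandit. Because the oracle state evolution and the reply successes are resampled independently at the start of each episode, repeated pulls of a fixed arm produce independent and identically distributed rewards whose mean equals $-\valuefn^{\policy}(\statevar_\numqueries)$ from~\eqref{eq:cummalativecost}, and assumption (B1) supplies the sub-Gaussian tails of the per-pull reward noise. The optimal arm is the policy attaining $\valuefn^{\optpolicy}$ in~\eqref{eq:mdp}, and for each suboptimal arm $i$ I would let $\Delta_i$ denote the gap between its mean reward and the optimal mean reward.

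With the bandit in this form, the standard analysis for sub-Gaussian arms (Chapter~2 of~\cite{bubeck_regret_2012}) yields, after $\numinteractions$ episodes, an expected regret bounded by $\sum_{i:\Delta_i>0}\left(\frac{c\log\numinteractions}{\Delta_i}+O(1)\right)$ for a universal constant $c$. Bounding the number of suboptimal arms by $K-1$ and the inverse gaps by $1/\Delta_{\min}$ (a problem-dependent constant), this is $O(K\log\numinteractions)$. I would then substitute the arm count: since $(\numsuccessfulupdates+1)^{\thresholdcount}=O(\numsuccessfulupdates^{\thresholdcount})$ and, by Theorem~\ref{th:numsuccupdates}, $\numsuccessfulupdates=O(\nicefrac{1}{\epsilon}+\nicefrac{\noisevariance}{\epsilon^2})$, the regret becomes $O(\numsuccessfulupdates^{\thresholdcount}\log\numinteractions)$, which is the claimed bound.

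The point needing the most care is not the UCB bound itself, which is standard, but the justification that the per-arm reward distribution is stationary and sub-Gaussian. One must verify that each episode genuinely resets the MDP (in particular the initial queue state $\learnerstate_\numqueries=\numsuccessfulupdates$ and the initial oracle state distribution) so that successive pulls of the same policy are i.i.d., and that the cumulative cost, assembled from the logarithmic terms of~\eqref{eq:learningcost} over a finite horizon with finitely many incentives and a finite queue, is bounded so that the sub-Gaussian hypothesis (B1) is consistent. Once this reduction is secured, the result follows by direct substitution of the arm count and the order of $\numsuccessfulupdates$.
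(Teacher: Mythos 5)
Your proposal is correct and follows essentially the same route as the paper: the paper's proof is the one-line observation that one treats each of the $(\numsuccessfulupdates+1)^{\thresholdcount}$ threshold combinations as a bandit arm and plugs this arm count into the standard UCB regret bound of~\cite{bubeck_regret_2012}, with $\numsuccessfulupdates = O(\nicefrac{1}{\epsilon}+\nicefrac{\noisevariance}{\epsilon^2})$ supplied by Theorem~\ref{th:numsuccupdates}. Your additional care about episode-wise independence, the sub-Gaussian hypothesis, and the suppression of the inverse gaps into a problem-dependent constant is a sound elaboration of what the paper leaves implicit under (B1), not a different argument.
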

 The proof follows from Theorem~\ref{th:numsuccupdates} and plugging the number of arms in the standard regret bound for UCB~\cite{bubeck_regret_2012}. Although the regret for this approach is bounded, the significant limitations are that the bound is exponential in the state and action space and, compared to SPSA, it cannot track changes in the system.

\vspace{-3mm}
\section{Example: Covert Federated Learning for Hate-Speech Classification}\label{sec:numerical}
We demonstrate the proposed covert optimization framework on a numerical experiment for hate-speech classification using federated learning in the presence of an eavesdropper. An eavesdropper spoofs as a client and misuses the optimal weights to generate hate speech, which goes undetected by the classifier. {The detailed motivation and experimental setup can be found in~\cite{jain_controlling_2024}.}
A balanced subset of the civil comments toxicity dataset by Jigsaw AI is used, which has comments along with annotations for whether the comment is toxic or not The federated learning setup consists of $N_{c} = 35$ clients, each having $N_d = 689$ data points. A fully connected neural network attached to a pre-trained transformer is trained with a cross-entropy loss to classify the comments as toxic or not. The accuracy is reported on a balanced validation dataset~{\footnote{The results are reproducible and can be found on the Github repository: github.com/aditj/CovertOptimization. The repository also contains links to the dataset, the complete set of experimental parameters, and {a supplementary document with additional benchmarks and illustrations}.}}.  

We consider $\numsuccessfulupdates=45$ successful gradient steps and $\numqueries=100$ queries, and the oracle levels are based on client participation. Each client participates in a Markovian fashion with a probability of staying connected or not connected as $0.8$. $\oraclelevels = 3$ oracle states correspond to the minimum number of participating clients $\statespace^\oraclesymbol=[1 = 1,2 = 12,3 = 24]$. We consider $\numprices = 3$ incentive levels as $\{1,2,3\}$. The number of samples each client contributes in each round depends on the incentive, as [10\%,40\%,80\%] of $N_d$ for the respective incentives. We consider a round successful if the number of samples exceeds $4000$.  The emperical success probabilities are $\succfunction(\oraclestate,\price) = [[0,0.1,0.2],[0.1,0.2,0.6],[0.3,0.6,0.9]]$. {The functions $\buffercost$ and $\oraclecost$ in~\eqref{eq:learningcost} are quadratic in $\learnerstate$ and $\oraclestate$, respectively.} { This satisfies assumptions R1, R2, R3. The emperical success probabilities along with the resulting cost function of~\eqref{eq:learningcost} ensure that R4, R5, R6 are satisfied for $\idfactorone_{\learnerstate,\learnerstate^\prime,\action} = \idfactortwo_{\learnerstate,\learnerstate^\prime,\action}\leq 1$.  } The queue cost is $\queuecost(\learnerstate) \propto \learnerstate^4$. The optimal stationary policy with the threshold structure is obtained using SPSA with $\spsastepsize_\timeindex = 0.01$, $\perturbation = 0.1$, and $\spsatimehorizon = 3000$ episodes.

\begin{table}[h!]
\vspace{2mm}
    \centering
    \begin{tabular}{cccc}
    \hline
     Type of Policy  & Learner Acc. & Eaves. Acc. & Incentive \\\hline
    Optimal Policy & 90\% & 54\% &  254 \\
     {Optimal Policy from~\cite{jain_controlling_2024}}&{ 89\%} & {53\%} & {290 }\\
     Greedy Policy & 91\% & 89\%  & 300 \\
     Random Policy & 52\% & 53\% & 190 \\
     \hline
    \end{tabular}
    \caption{The optimal stationary policy with a threshold structure outperforms greedy policy by $35\%$ on eavesdropper accuracy and random policy by $38\%$ on learner accuracy.}
        \label{tab:exp1}
\vspace{-8mm}
\end{table}

The results are averaged for $N_{mc} = 100$ runs and reported in  Table~\ref{tab:exp1}. {The greedy policy learns first with a maximum incentive, and random policy uniformly samples from the action space.} The optimal policy is better than the greedy policy in terms of the eavesdropper accuracy corresponding to the maximum a posteriori trajectory of~\eqref{eq:eavesestimate}. The optimal policy outperforms the random policy on learner accuracy. { The learner saves $~14\%$ incentive spent compared to the greedy policy. We also benchmark against the optimal policy from~\cite{jain_controlling_2024} with constant incentivization ($\price_\timeindex=3$) and similar to the greedy policy, the accuracies are comparable, but the optimal policy of this paper improves incentive expenditure by $12\%$.}
 \vspace{-3mm}
\section{Conclusion}\label{sec:conc}
{The proposed MDP} framework solves the learner's problem of dynamically optimizing a function {by querying and incentivizing a stochastic oracle} and obfuscating an eavesdropper by switching between two stochastic gradients. Using interval dominance, we prove structural results on the monotone threshold nature of the optimal policy. {In our numerical experiments, the optimal stationary policy with the threshold structure outperformed the greedy policy on the eavesdropper accuracy and the incentive spent.} { In future work, the problem of obfuscating sequential eavesdroppers can be formulated as a Bayesian social learning problem, where initially 
 the eavesdropper is obfuscated maximally to make it stop participating and its departure provides an indication to the subsequent eavesdroppers that the learner is obfuscating. Hence, the eavesdroppers can eventually be made to herd, forming an information cascade so that they don't eavesdrop anymore, regardless of whether the learner is learning or not.  }
 \vspace{-7mm}
\bibliographystyle{abbrv}
\bibliography{references}

\begin{thebibliography}{10}

\bibitem{bersani_distributed_2019}
C.~Bersani, H.~Dagdougui, C.~Roncoli, and R.~Sacile.
\newblock Distributed {Product} {Flow} {Control} in a {Network} of {Inventories} {With} {Stochastic} {Production} and {Demand}.
\newblock {\em IEEE Access}, 7:22486--22494, 2019.

\bibitem{bubeck_regret_2012}
S.~Bubeck, N.~Cesa-Bianchi, and S.~Bubeck.
\newblock {\em Regret {Analysis} of {Stochastic} and {Nonstochastic} {Multi}-{Armed} {Bandit} {Problems}}.
\newblock Now Publishers, 2012.
\newblock Google-Books-ID: Rl2skwEACAAJ.

\bibitem{ghadimi_stochastic_2013}
S.~Ghadimi and G.~Lan.
\newblock Stochastic {First}- and {Zeroth}-{Order} {Methods} for {Nonconvex} {Stochastic} {Programming}.
\newblock {\em SIAM Journal on Optimization}, 23(4):2341--2368, Jan. 2013.

\bibitem{jain_controlling_2024}
A.~Jain and V.~Krishnamurthy.
\newblock Controlling {Federated} {Learning} for {Covertness}.
\newblock {\em Transactions on Machine Learning Research}, 2024.

\bibitem{krishnamurthy_interval_2023}
V.~Krishnamurthy.
\newblock Interval dominance based structural results for {Markov} decision process.
\newblock {\em Automatica}, 153:111024, July 2023.

\bibitem{kushner_stochastic_2003}
H.~Kushner and G.~G. Yin.
\newblock {\em Stochastic {Approximation} and {Recursive} {Algorithms} and {Applications}}.
\newblock Springer Science \& Business Media, July 2003.
\newblock Google-Books-ID: EC2w1SaPb7YC.

\bibitem{ngo_monotonicity_2010}
M.~H. Ngo and V.~Krishnamurthy.
\newblock Monotonicity of {Constrained} {Optimal} {Transmission} {Policies} in {Correlated} {Fading} {Channels} {With} {ARQ}.
\newblock {\em IEEE Transactions on Signal Processing}, 58(1):438--451, Jan. 2010.

\bibitem{quah_comparative_2009}
J.~K.-H. Quah and B.~Strulovici.
\newblock Comparative {Statics}, {Informativeness}, and the {Interval} {Dominance} {Order}.
\newblock {\em Econometrica}, 77(6):1949--1992, 2009.
\newblock Publisher: Wiley, The Econometric Society.

\bibitem{shi_finite-time_2023}
X.~Shi, G.~Wen, and X.~Yu.
\newblock Finite-{Time} {Convergent} {Algorithms} for {Time}-{Varying} {Distributed} {Optimization}.
\newblock {\em IEEE Control Systems Letters}, 7:3223--3228, 2023.

\bibitem{tsitsiklis_private_2021}
J.~N. Tsitsiklis, K.~Xu, and Z.~Xu.
\newblock Private {Sequential} {Learning}.
\newblock {\em Operations Research}, 69(5):1575--1590, Sept. 2021.

\bibitem{witt_decentral_2023}
L.~Witt, M.~Heyer, K.~Toyoda, W.~Samek, and D.~Li.
\newblock Decentral and {Incentivized} {Federated} {Learning} {Frameworks}: {A} {Systematic} {Literature} {Review}.
\newblock {\em IEEE Internet of Things Journal}, 10(4):3642--3663, Feb. 2023.

\bibitem{xu_learner-private_2023}
J.~Xu, K.~Xu, and D.~Yang.
\newblock Learner-{Private} {Convex} {Optimization}.
\newblock {\em IEEE Transactions on Information Theory}, 69(1):528--547, Jan. 2023.

\end{thebibliography}
\end{document}